\title{{\bf Applications of sampling Kantorovich operators to thermographic images\\ for seismic engineering}}
\author{{\bf Federico Cluni}, \hskip0.5cm {\bf Danilo Costarelli},\\
 {\bf Anna Maria Minotti} \hskip0.3cm and \hskip0.3cm {\bf Gianluca Vinti}\thanks{corresponding author, email: {\tt gianluca.vinti@unipg.it}}} 
\date{}
\newcommand{\miu}{\leq}
\newcommand{\ep}{\varepsilon}
\newcommand{\N}{\mathbb{N}}
\newcommand{\R}{\mathbb{R}}
\newcommand{\Z}{\mathbb{Z}}
\newcommand{\disp}{\displaystyle}
\newcommand{\kk}{\underline{k}}
\newcommand{\tk}{t_{\underline{k}}}
\newcommand{\uu}{\underline{u}}
\newcommand{\xx}{\underline{x}}
\newcommand{\be}{\begin{equation}}
\newcommand{\ee}{\end{equation}}
\newtheorem{definition}{Definition}[section]
\newtheorem{remark}[definition]{Remark}
\newtheorem{theorem}[definition]{Theorem}
\newtheorem{lemma}[definition]{Lemma}
\newtheorem{corollary}[definition]{Corollary}
\begin{document}
\maketitle

\begin{abstract}
In this paper, we present some applications of the multivariate sampling Kantorovich operators $S_w$ to seismic engineering. The mathematical theory of these operators, both in the space of continuous functions and in Orlicz spaces, show how it is possible to approximate/reconstruct multivariate signals, such as images. In particular, to obtain applications for thermographic images a mathematical algorithm is developed using MATLAB and matrix calculus. The setting of Orlicz spaces is important since allow us to reconstruct not necessarily continuous signals by means of $S_w$. The reconstruction of thermographic images of buildings by our sampling Kantorovich algorithm allow us to obtain models for the simulation of the behavior of structures under seismic action. We analyze a real world case study in term of structural analysis and we compare the behavior of the building under seismic action using various models.
\vskip0.15cm
\noindent
  {\footnotesize AMS 2010 Subject Classification: 41A35, 46E30, 47A58,47B38, 94A12}
\vskip0.1cm
\noindent
  {\footnotesize Key words and phrases:  Sampling Kantorovich operators, Orlicz spaces, Image Processing, thermographic images, structural analysis.} 
\end{abstract}

\section{Introduction}

The sampling Kantorovich operators have been introduced to approximate and reconstruct not necessarily continuous signals. In \cite{BABUSTVI}, the authors introduced this operators starting from the well-known generalized sampling operators (see e.g. \cite{BUFIST,BUST2,BAVI2,VI1,BABUSTVI2}) and replacing, in their definition, the sample values $f(k/w)$ with $w \int_{k/w}^{(k+1)/w}f(u)\,du$. Clearly, this is the most natural mathematical modification to obtain operators which can be well-defined also for general measurable, locally integrable functions, not necessarily continuous. Moreover, this situation very often occur in Signal Processing, when one cannot match exactly the sample at the point $k/w$: this represents the so-called "time-jitter'' error. The theory of sampling Kantorovich operators allow us to reduces the time-jitter error, calculating the information in a neighborhood of $k/w$ rather that exactly in the node $k/w$. These operators, as the generalized sampling operators, represent an approximate version of classical sampling series, based on the Whittaker-Kotelnikov-Shannon sampling theorem (see e.g. \cite{ANVI}).

   Subsequently, the sampling Kantorovich operators have been studied in various settings. In \cite{COVI,COVI2} the multivariate version of these operators were introduced. Results concerning the order of approximation are shown in \cite{COVI3}. Extensions to more general contexts are presented in \cite{VIZA1,VIZA2,VIZA3,BAMA2}.

   The multivariate sampling Kantorovich operators considered in this paper are of the form:
$$
(S_w f)(\underline{x})\ :=\ \sum_{\underline{k} \in \Z^n} \chi(w\underline{x}-t_{\underline{k}})\left[\frac{w^n}{A_{\underline{k}}} \int_{R_{\underline{k}}^w}f(\underline{u})\ d\underline{u}\right], \hskip0.7cm (\underline{x} \in \R^n), \hskip0.8cm \mbox{(I)}
$$
where $f: \R^n \to \R$ is a locally integrable function such that the above series is convergent for every $\underline{x} \in \R^n$. The symbol $t_{\underline{k}}=\left(t_{k_1},...,t_{k_n}\right)$ denotes vectors where each $(t_{k_i})_{k_i \in \Z}$, $i=1,...,n$ is a certain strictly increasing sequence of real numbers with $\Delta_{k_i}=t_{k_{i+1}}-t_{k_i}>0$.
Note that, the sequences $(t_{k_i})_{k_i \in \Z}$ are not necessary equally spaced. We denote by $R_{\underline{k}}^w$ the sets:
$$
R_{\underline{k}}^w\ :=\ \left[\frac{t_{k_1}}{w},\frac{t_{k_1+1}}{w}\right]\times\left[\frac{t_{k_2}}{w},\frac{t_{k_2+1}}{w}\right]\times...\times\left[\frac{t_{k_n}}{w},\frac{t_{k_n+1}}{w}\right], \hskip2.2cm \mbox{(II)}
$$
$w>0$ and $A_{\underline{k}} = \Delta_{k_1} \cdot \Delta_{k_2} \cdot...\cdot \Delta_{k_n}$, $\underline{k} \in \Z^n$. Moreover, the function $\chi:\R^n \to \R$ is a kernel satisfying suitable assumptions. 

   For the operators in (I) we recall some convergence results. We have that the family $(S_wf)_{w>0}$ converges pointwise to $f$, when $f$ is continuous and bounded and $(S_w f)_{w>0}$ converges uniformly to $f$, when $f$ is uniformly continuous and bounded. Moreover, to cover the case of not necessarily continuous signal, we study our operators in the general setting of Orlicz spaces $L^{\varphi}(\R^n)$. For functions $f$ belonging to $L^{\varphi}(\R^n)$ and generated by the convex $\varphi$-function $\varphi$, the family of sampling Kantorovich operators is "modularly'' convergent to $f$, being the latter the natural concept of convergence in this setting.  

   The latter result, allow us to apply the theory of the sampling Kantorovich operators to approximate and reconstruct images. In fact, static gray scale images are characterized by jumps of gray levels mainly concentrated in their contours or edges and this can be translated, from a mathematical point of view, by discontinuities (see e.g. \cite{GOWO}).

  Here, we introduce and analyze in detail some practical applications of the sampling Kantorovich algorithm to thermographic images, very useful for the analysis of buildings in seismic engineering. The thermography is a remote sensing technique, performed by the image acquisition in the infrared. Thermographic images are widely used to make non-invasive investigations of structures, to analyze the story of the building wall, to make diagnosis and monitoring buildings, and to make structural measurements. A further important use, is the application of the texture algorithm for the separation between the bricks and the mortar in masonries images. Through this procedure the civil engineers becomes able to determine the mechanical parameters of the structure under investigation. Unfortunately, the direct application of the texture algorithm to the thermographic images, can produce errors, as an incorrect separation between the bricks and the mortar. 

  Then, we use the sampling Kantorovich operators to process the thermographic images before to apply the texture algorithm. In this way, the result produced by the texture becomes more refined and therefore we can apply structural analysis after the calculation of the various parameters involved. In order to show the feasibility of our applications, we present in detail a real-world case-study.


\section{Preliminaries}

In this section we recall some preliminaries, notations and definitions.

  We denote by $C(\R^n)$ (resp. $C^0(\R^n)$) the space of all uniformly continuous and bounded (resp. continuous and bounded) functions $f:\R^n\to \R$ endowed with the usual sup-norm $\|f\|_{\infty} := \sup_{\uu \in \R^n}\left|f(\uu)\right|$, $\uu = (u_1,\, ...,\, u_n)$, and by $C_c(\R^n) \subset C(\R^n)$ the subspace of the elements having compact support. Moreover, $M\left(\R^n\right)$ will denote the linear space of all (Lebesgue) measurable real functions defined on $\R^n$.

   We now recall some basic fact concerning Orlicz spaces, see e.g. \cite{MUORL,MU1,RAO1,BAVI_1,BAMUVI}.  

\noindent   The function $\varphi: \R^+_0 \to \R^+_0$ is said to be a $\varphi$-function if it satisfies the following assumptions: $(i)$ $\varphi \left(0\right)=0$, and $\varphi \left(u\right)>0$ for every $u>0$; $(ii)$ $\varphi$ is continuous and non decreasing on $\R^+_0$; $(iii)$
$\lim_{u\to \infty}\varphi(u)\ =\ + \infty$.

  The functional $I^{\varphi} : M(\R^n)\to [0,+\infty]$ (see e.g. \cite{MU1,BAMUVI}) defined by
$$
I^{\varphi} \left[f\right] := \int_{\R^n} \varphi(\left| f(\underline{x}) \right|)\ d\underline{x},\ \hskip0.5cm \left(f \in M(\R^n)\right),
$$
is a modular in $M(\R^n)$ . The Orlicz space generated by $\varphi$ is given by
$$
L^{\varphi}(\R^n) := \left\{f \in M\left(\R^n\right):\ I^{\varphi} [\lambda f]<+\infty,\ \mbox{for\ some}\ \lambda>0\right\}.
$$
The space $L^{\varphi}(\R^n)$ is a vector space and an important subspace is given by
$$
E^{\varphi}(\R^n)  := \left\{f \in M\left(\R^n\right):\ I^{\varphi} [\lambda f]<+\infty,\ \mbox{for\ every}\ \lambda>0\right\}.
$$
$E^{\varphi}(\R^n)$ is called the space of all finite elements of $L^{\varphi}(\R^n)$. It is easy to see that the following inclusions hold:
$
C_c(\R^n) \subset E^{\varphi}(\R^n) \subset L^{\varphi}(\R^n).
$
Clearly, functions belonging to $E^{\varphi}(\R^n)$ and $L^{\varphi}(\R^n)$ are not necessarily continuous. A norm on $L^{\varphi}(\R^n)$, called Luxemburg norm, can be defined by
$$
\left\|f\right\|_{\varphi}\ :=\ \inf \left\{\lambda>0:\ I^{\varphi}[f/\lambda] \miu \lambda\right\}, \hskip0.7cm (f \in L^{\varphi}(\R^n)).
$$

   We will say that a family of functions $(f_w)_{w>0} \subset L^{\varphi}(\R^n)$ is norm convergent to a function $f \in L^{\varphi}(\R^n)$, i.e., $\left\|f_w-f\right\|_{\varphi}\rightarrow 0$ for $w\to +\infty$, if and only if 
$\lim_{w \to +\infty} I^{\varphi}\left[\lambda(f_w-f)\right]\ =\ 0$, for every $\lambda>0$.
Moreover, an additional concept of convergence can be studied in Orlicz spaces: the "modular convergence". The latter induces a topology (modular topology) on the space $L^{\varphi}(\R^n)$ (\cite{MU1,BAMUVI}). 

  We will say that a family of functions $(f_w)_{w>0} \subset L^{\varphi}(\R^n)$ is modularly convergent to a function $f \in L^{\varphi}(\R^n)$ if
$
\lim_{w \to +\infty} I^{\varphi}\left[\lambda(f_w-f)\right]\ =\ 0$,
for some $\lambda>0$. Obviously, norm convergence implies modular convergence, while the converse implication does not hold in general. The modular and norm convergence are equivalent if and only if the $\varphi$-function $\varphi$ satisfies the $\Delta_2$-condition, see e.g., \cite{MU1,BAMUVI}. Finally, as last basic property of Orlicz spaces, we recall the following.
\begin{lemma}[\cite{BAMA}] \label{lemma1}
The space $C_c(\R^n)$ is dense in $L^{\varphi}(\R^n)$ with respect to the modular topology, i.e., for every $f \in L^{\varphi}(\R^n)$ and for every $\ep>0$ there exists a constant $\lambda>0$ and a function $g \in C_c(\R^n)$ such that $I^{\varphi}[\lambda(f-g)]<\ep$.
\end{lemma}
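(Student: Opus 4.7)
\medskip

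\noindent\textbf{Proof plan.} The plan is to approximate a general $f \in L^{\varphi}(\R^n)$ by a sequence of progressively more regular auxiliary functions, eventually reaching $C_c(\R^n)$, controlling the modular at each step. Fix $\ep>0$. By definition of $L^{\varphi}(\R^n)$ there exists $\mu>0$ such that $I^{\varphi}[\mu f] = \int_{\R^n}\varphi(\mu|f|)\,d\underline{x} < +\infty$. I will produce $g \in C_c(\R^n)$ for which $I^{\varphi}[(\mu/3)(f-g)] < \ep$; the constant $\lambda$ of the lemma will then be $\lambda = \mu/3$.

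\smallskip

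\noindent\emph{Step 1: truncation of the domain.} For $N>0$ set $f_N := f \cdot \mathbf{1}_{[-N,N]^n}$. Then $\varphi(\mu|f-f_N|) = \varphi(\mu|f|)\mathbf{1}_{\R^n\setminus[-N,N]^n}$, which is dominated by the integrable function $\varphi(\mu|f|)$ and tends to $0$ pointwise a.e.\ as $N\to+\infty$. By the dominated convergence theorem, $I^{\varphi}[\mu(f-f_N)] < \ep/3$ for $N$ sufficiently large.

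\smallskip

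\noindent\emph{Step 2: truncation of the values.} For $M>0$ let $f_{N,M}(\underline{x}) := f_N(\underline{x})$ when $|f_N(\underline{x})| \le M$, and $f_{N,M}(\underline{x}) := M \operatorname{sgn}(f_N(\underline{x}))$ otherwise. Then $|f_N-f_{N,M}| \le |f_N| \mathbf{1}_{\{|f_N|>M\}}$ and again by monotonicity of $\varphi$ and dominated convergence, $I^{\varphi}[\mu(f_N-f_{N,M})] < \ep/3$ for $M$ large enough. Note that $f_{N,M}$ is now bounded by $M$ and compactly supported in $[-N,N]^n$.

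\smallskip

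\noindent\emph{Step 3: Lusin-type approximation.} Apply Lusin's theorem to the bounded, measurable, compactly supported function $f_{N,M}$: for any $\eta>0$ there is $g \in C_c(\R^n)$ with $\|g\|_\infty \le M$ and $|E_\eta|<\eta$, where $E_\eta := \{\underline{x} \in \R^n : g(\underline{x})\neq f_{N,M}(\underline{x})\}$ (note $|g-f_{N,M}|$ vanishes outside $E_\eta$, which sits inside a fixed compact set). Then
$$
I^{\varphi}[\mu(f_{N,M}-g)] \;=\; \int_{E_\eta} \varphi(\mu|f_{N,M}-g|)\,d\underline{x} \;\le\; \varphi(2\mu M)\,\eta,
$$
which is $<\ep/3$ if $\eta$ is chosen small enough (here $\varphi(2\mu M)$ is a fixed finite constant).

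\smallskip

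\noindent\emph{Step 4: combining the estimates.} Using monotonicity of $\varphi$ via $\varphi(a+b+c) \le \varphi(3\max\{a,b,c\}) \le \varphi(3a)+\varphi(3b)+\varphi(3c)$, I obtain pointwise
$$
\varphi\!\left(\tfrac{\mu}{3}|f-g|\right) \;\le\; \varphi(\mu|f-f_N|)+\varphi(\mu|f_N-f_{N,M}|)+\varphi(\mu|f_{N,M}-g|),
$$
so integrating and using Steps 1--3 yields $I^{\varphi}[(\mu/3)(f-g)] < \ep$, proving the claim with $\lambda=\mu/3$.

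\smallskip

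\noindent\emph{Main obstacle.} The delicate point is Step 3: one must ensure that $g$ is uniformly bounded (independently of $\eta$) and that the exceptional set has small \emph{measure}, not merely small modular weight, since without convexity of $\varphi$ no Jensen-type inequality is available and $\varphi$ need not be submultiplicative. The uniform bound $\|g\|_\infty \le M$ is exactly what lets the estimate $\varphi(\mu|f_{N,M}-g|) \le \varphi(2\mu M)$ pull the unbounded function $\varphi$ outside the integral and convert smallness of $|E_\eta|$ into smallness of the modular.
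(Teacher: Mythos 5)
Your proof is correct. Note that the paper itself gives no argument for this lemma --- it is quoted from the reference [Bardaro--Mantellini, 1998], where it is proved in the more general Musielak--Orlicz setting --- so there is no in-paper proof to compare against; your truncation-plus-Lusin scheme is the standard route and is essentially the one underlying the cited result. All the steps check out: Step 1 uses $\varphi(0)=0$ to get the exact identity $\varphi(\mu|f-f_N|)=\varphi(\mu|f|)\mathbf{1}_{\R^n\setminus[-N,N]^n}$ and dominated convergence applies; Step 2 is sound since $|f_N-f_{N,M}|\le|f_N|\mathbf{1}_{\{|f_N|>M\}}$ and $|f|<+\infty$ a.e.; Step 3 invokes exactly the form of Lusin's theorem (Rudin, Thm.~2.24) that yields $g\in C_c(\R^n)$ with $\|g\|_\infty\le\|f_{N,M}\|_\infty$, which is what makes the bound $\varphi(2\mu M)\,|E_\eta|$ legitimate; and the splitting inequality $\varphi\left(\tfrac{\mu}{3}(a+b+c)\right)\le\varphi(\mu a)+\varphi(\mu b)+\varphi(\mu c)$ uses only monotonicity and non-negativity of $\varphi$, so, as you correctly observe, no convexity is needed --- which matters, because the lemma is stated for general $\varphi$-functions while convexity is only assumed later in the convergence theorems. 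Your closing remark correctly identifies the one genuinely delicate point (uniform boundedness of $g$ and smallness of the exceptional set in measure); the proof stands as written.
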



\section{The sampling Kantorovich operators}

In this section we recall the definition of the operators with which we will work. We will denote by 
$t_{\underline{k}}=\left(t_{k_1},...,t_{k_n}\right)$ a vector where each element $(t_{k_i})_{k_i \in \Z}$, $i=1,...,n$ is a sequence of real numbers with $-\infty < t_{k_i}<t_{k_{i+1}}<+\infty$, $\lim_{k_i \to \pm \infty}t_{k_i} = \pm 	\infty$, for every $i=1, ..., n$, and such that there exists $\Delta$, $\delta>0$ for which $\delta \leq \Delta_{k_i}:= t_{k_{i+1}}-t_{k_{i}} \leq  \Delta$, for every $i=1, ... , n$.
Note that, the elements of $(t_{k_i})_{k_i \in \Z}$ are not necessary equally spaced.  In what follows, we will identify with the symbol $\Pi^n$ the sequence $(t_{\underline{k}})_{\kk \in \Z^n}$.

  A function $\chi: \R^n \to \R$ will be called a kernel if it satisfies the following properties:
\begin{itemize}
	\item[$(\chi1)$] $\chi \in L^1(\R^n)$ and is bounded in a neighborhood of $\underline{0} \in \R^n$; 
	\item[$(\chi 2)$] for every $\underline{u} \in \R^n$, \hskip0.2cm $\displaystyle \sum_{\underline{k} \in \Z^n} \chi(\underline{u} - t_{\underline{k}})\ =\ 1$;
	\item[$(\chi 3)$] for some $\beta>0$,
$$
m_{\beta, \Pi^n }(\chi)\ =\ \sup_{\underline{u} \in \R^n}\sum_{\underline{k} \in \Z^n}\left|\chi(\underline{u}-t_{\underline{k}})\right|\cdot \left\|\underline{u}-t_{\underline{k}}\right\|^{\beta}_2\ <\ +\infty,
$$
where $\| \cdot \|_2$ denotes the usual Euclidean norm. 
\end{itemize}
We now recall the definition of the linear multivariate sampling Kantorovich operators introduced in \cite{COVI}. Define:
\begin{equation} \label{KANTO}
(S_w f)(\underline{x})\ :=\ \sum_{\underline{k} \in \Z^n} \chi(w\underline{x}-t_{\underline{k}})\left[\frac{w^n}{A_{\underline{k}}} \int_{R_{\underline{k}}^w}f(\underline{u})\ d\underline{u}\right], \hskip0.7cm (\underline{x} \in \R^n),
\end{equation}
where $f: \R^n \to \R$ is a locally integrable function such that the above series is convergent for every $\underline{x} \in \R^n$, where
\begin{displaymath}
R_{\underline{k}}^w\ :=\ \left[\frac{t_{k_1}}{w},\frac{t_{k_1+1}}{w}\right]\times\left[\frac{t_{k_2}}{w},\frac{t_{k_2+1}}{w}\right]\times...\times\left[\frac{t_{k_n}}{w},\frac{t_{k_n+1}}{w}\right],
\end{displaymath}
$w>0$ and $A_{\underline{k}} = \Delta_{k_1} \cdot \Delta_{k_2} \cdot...\cdot \Delta_{k_n}$, $\underline{k} \in \Z^n$. The operators in (\ref{KANTO}) have been introduced in \cite{BABUSTVI} in the univariate setting.
\begin{remark} \label{remark1} \rm
(a) Under conditions $(\chi 1)$ and $(\chi 3)$, the following properties for the kernel $\chi$ can be proved:
$$
m_{0, \Pi^n}(\chi)\ :=\ \sup_{\uu \in \R^n} \sum_{\kk \in \Z^n} \left|\chi(\uu-\tk)\right|\ <\ +\infty,
$$
and, for every $\gamma >0$
\begin{equation} \label{eee}
\lim_{w \to +\infty} \sum_{\left\|w \uu-\tk\right\|_2 >\gamma w}\left|\chi(w \uu - \tk)\right|\ =\ 0,
\end{equation}
uniformly with respect to $\uu \in \R^n$, see \cite{COVI}.
\vskip0.2cm
\noindent (b) By (a), we obtain that $S_w f$ with $f \in L^{\infty}(\R^n)$ are well-defined. Indeed,
\begin{displaymath}
\left|(S_w f)(\xx)\right|\ \miu\ m_{0, \Pi^n}(\chi)\left\|f\right\|_{\infty}\ < +\infty,
\end{displaymath}
for every $\xx \in \R^n$, i.e. $S_w : L^{\infty}(\R^n)\rightarrow L^{\infty}(\R^n)$.
\end{remark}
\begin{remark} \label{remark3} \rm
Note that, in the one-dimensional setting, choosing $t_{k}=k$, for every $k \in \Z$, condition $(\chi 2)$ is equivalent to 
\begin{displaymath}
\widehat{\chi}(k) :=\ \left\{
\begin{array}{l}
0, \hskip0.5cm k \in \Z\setminus \left\{0\right\}, \\
1, \hskip0.5cm k=0,
\end{array}
\right.
\end{displaymath}
where $\widehat{\chi}(v):=\int_{\R}\chi(u)e^{-ivu}\ du$, $v \in \R$, denotes the Fourier transform of $\chi$; see \cite{BUNE,BABUSTVI,COVI,COVI3}.
\end{remark}


\section{Convergence results}

In this section, we show the main approximation results for the multivariate sampling Kantorovich operators. In \cite{COVI}, the following approximation theorem for our operators has been proved. 
\begin{theorem} \label{th1}
Let $f \in C^0(\R^n)$. Then, for every $\underline{x} \in \R^n$,
\begin{displaymath}
\lim_{w \to +\infty} (S_w f)(\underline{x})\ =\ f(\underline{x}).
\end{displaymath}
In particular, if $f \in C(\R)$, then
\begin{displaymath}
\lim_{w \to +\infty} \left\|S_w f -f\right\|_{\infty}\ =\ 0.
\end{displaymath}
\end{theorem}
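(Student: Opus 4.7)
The plan is to exploit the partition of unity property $(\chi 2)$, which allows us to write $f(\underline{x}) = \sum_{\underline{k} \in \Z^n} \chi(w\underline{x} - t_{\underline{k}})\, f(\underline{x})$ for every $w > 0$ and $\underline{x} \in \R^n$. Subtracting this from $(S_w f)(\underline{x})$ and pulling the constant $f(\underline{x})$ inside the inner average (which integrates $1$ over $R_{\underline{k}}^w$ with the correct normalization $w^n/A_{\underline{k}}$), one obtains the key identity
\begin{displaymath}
(S_w f)(\underline{x}) - f(\underline{x}) \ =\ \sum_{\underline{k} \in \Z^n} \chi(w\underline{x} - t_{\underline{k}}) \left[ \frac{w^n}{A_{\underline{k}}} \int_{R_{\underline{k}}^w} \bigl( f(\underline{u}) - f(\underline{x}) \bigr)\, d\underline{u} \right].
\end{displaymath}
Taking absolute values inside reduces matters to controlling a weighted average of the oscillation of $f$ near $\underline{x}$.

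Next I would split the sum over $\underline{k}$ into a \emph{local} part, where $\|w\underline{x} - t_{\underline{k}}\|_2 \leq \gamma w$ for a suitably chosen $\gamma > 0$, and a \emph{tail} part, where $\|w\underline{x} - t_{\underline{k}}\|_2 > \gamma w$. For the local part, the crucial observation is that for $\underline{u} \in R_{\underline{k}}^w$ one has $\|\underline{u} - t_{\underline{k}}/w\|_2 \leq \sqrt{n}\,\Delta/w$, so by the triangle inequality $\|\underline{u} - \underline{x}\|_2 \leq \gamma + \sqrt{n}\,\Delta/w$. Given $\varepsilon > 0$, continuity of $f$ at $\underline{x}$ furnishes a $\gamma > 0$ so that $|f(\underline{u}) - f(\underline{x})| < \varepsilon$ whenever $\|\underline{u} - \underline{x}\|_2 < 2\gamma$; for $w$ large enough we have $\sqrt{n}\,\Delta/w < \gamma$, so the inner integral is bounded by $\varepsilon$, and the whole local sum is dominated by $\varepsilon \cdot m_{0, \Pi^n}(\chi)$, which is finite by Remark \ref{remark1}(a).

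For the tail part, boundedness of $f$ gives the trivial bound $|f(\underline{u}) - f(\underline{x})| \leq 2\|f\|_\infty$ inside the integral, so the tail is controlled by
\begin{displaymath}
2\|f\|_\infty \sum_{\|w\underline{x} - t_{\underline{k}}\|_2 > \gamma w} |\chi(w\underline{x} - t_{\underline{k}})|,
\end{displaymath}
which, by formula (\ref{eee}) of Remark \ref{remark1}(a), tends to $0$ as $w \to +\infty$, uniformly in $\underline{x}$. Combining the two estimates gives the pointwise convergence. For the uniform statement, one repeats the argument using uniform continuity of $f$ to choose $\gamma$ independently of $\underline{x}$; then both the local bound $\varepsilon \cdot m_{0, \Pi^n}(\chi)$ and the tail bound (already uniform) are uniform in $\underline{x}$, yielding $\|S_w f - f\|_\infty \to 0$.

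The only real subtlety is the geometric bookkeeping: one must verify that $\underline{u} \in R_{\underline{k}}^w$ and $\|w\underline{x} - t_{\underline{k}}\|_2 \leq \gamma w$ together force $\underline{u}$ into a neighborhood of $\underline{x}$ whose size is controlled jointly by $\gamma$ and $\Delta/w$. Once $\gamma$ is fixed by the continuity modulus, one has to take $w$ large enough that $\sqrt{n}\,\Delta/w$ is absorbed into $\gamma$ \emph{before} invoking (\ref{eee}) for the tail; getting this order of quantifiers right is the main thing to be careful about, but no deeper difficulty is expected.
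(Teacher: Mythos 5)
Your proposal is correct and follows essentially the same route as the paper's proof: rewrite $(S_wf)(\underline{x})-f(\underline{x})$ via the partition of unity $(\chi2)$, split the sum at $\|w\underline{x}-t_{\underline{k}}\|_2\lessgtr\gamma w$, bound the local part by $\varepsilon\, m_{0,\Pi^n}(\chi)$ using (uniform) continuity and the tail by $2\|f\|_\infty$ times the quantity in (\ref{eee}). Your explicit bookkeeping with $\sqrt{n}\,\Delta/w$ is just a more detailed version of the paper's remark that $\underline{u}\in R^w_{\underline{k}}$ and $\|w\underline{x}-t_{\underline{k}}\|_2\leq w\gamma/2$ force $\|\underline{u}-\underline{x}\|_2\leq\gamma$ for large $w$.
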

\begin{proof}
Here we highlight the main points of the proof.

\noindent Let $f \in C^0(\R^n)$ and $\xx \in \R^n$ be fixed. By the continuity of $f$ we have that for every fixed $\ep>0$ there exists $\gamma>0$ such that $|f(\xx)-f(\uu)|<\ep$ for every $\| \xx -\uu\|_2 \miu \gamma$, $\uu \in \R^n$. Then, by $(\chi 2)$ we obtain:
$$
\left|(S_w f)(\xx)-f(\xx)\right|\ \miu\ \sum_{\kk \in \Z^n} \left|\chi(w \xx - \tk)\right| \frac{w^n}{A_{\kk}}\int_{R^w_{\kk}}\left|f(\uu)-f(\xx)\right|\, d\uu\ 
$$
\vskip-0.5cm
\begin{eqnarray*}
&=& (\sum_{\left\|w \xx-\tk\right\|_2 \miu w \gamma/2}+\sum_{\left\|w \xx-\tk\right\|_2 > w \gamma/2})\left|\chi(w \xx - \tk)\right|\frac{w^n}{A_{\kk}}\int_{R^w_{\kk}}\left|f(\uu)-f(\xx)\right|\ d\uu \\
&=&\ I_1+I_2.
\end{eqnarray*}
For $\uu \! \in \! R^w_{\kk}$ and $\left\|w \xx-\tk\right\|_2\! \miu \! w \gamma/2$ we have $\|\uu-\xx\|_2\! \miu \! \gamma$ for  $w\! > \!0$ sufficiently large, then by the continuity of $f$ we obtain $I_1 \miu m_{0,\Pi^n}(\chi) \ep$ (see Remark \ref{remark1} (a)). Moreover, by the boundedness of $f$ and (\ref{eee}) we obtain $I_2 \miu 2\|f\|_{\infty} \ep$ for $w>0$ sufficiently large, then the first part of the theorem follows since $\ep>0$ is arbitrary. The second part of the theorem follows similarly replacing $\gamma>0$ with the parameter of the uniform continuity of $f$. 
\end{proof}

  In order to obtain a modular convergence result, the following norm-convergence theorem for the sampling Kantorovich operators (see \cite{COVI}) can be formulated.
\begin{theorem} \label{norm_conv} 
Let $\varphi$ be a convex $\varphi$-function. For every $f \in C_c(\R^n)$ we have
$$
\lim_{w \to +\infty} \left\|S_w f - f \right\|_{\varphi}\ =\ 0.
$$
\end{theorem}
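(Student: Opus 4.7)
The plan is to combine the uniform convergence provided by Theorem~\ref{th1} with a tail estimate built from the $L^1$-integrability of the kernel $\chi$. Since convergence in the Luxemburg norm is equivalent to $I^{\varphi}[\lambda(S_w f - f)] \to 0$ for every $\lambda>0$, I fix such a $\lambda$ and work directly with the modular functional. Let $Q := \mathrm{supp}(f)$, which is compact, and choose a bounded measurable set $B \supset Q$ with $d := \mathrm{dist}(Q, \R^n \setminus B)>0$. Since $f$ vanishes off $Q$, the modular integral splits as
\[
I^{\varphi}[\lambda(S_w f - f)]\ =\ \int_B \varphi(\lambda|S_w f - f|)\, d\xx\ +\ \int_{\R^n \setminus B} \varphi(\lambda|S_w f|)\, d\xx\ =:\ I^{(1)}_w + I^{(2)}_w.
\]

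Handling $I^{(1)}_w$ is immediate: since $f \in C_c(\R^n) \subset C(\R^n)$, Theorem~\ref{th1} gives $\|S_w f - f\|_{\infty} \to 0$; together with the continuity of $\varphi$ at $0$ and $\varphi(0)=0$, this yields $I^{(1)}_w \leq |B|\, \varphi(\lambda \|S_w f - f\|_{\infty}) \to 0$.

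The main obstacle is $I^{(2)}_w$, because $\R^n \setminus B$ has infinite measure and so a pointwise bound alone is not enough. My plan for it has two stages. First, I would show that $\|S_w f\|_{L^{\infty}(\R^n \setminus B)} \to 0$: only indices $\kk$ with $R_{\kk}^w \cap Q \neq \emptyset$ contribute to $(S_wf)(\xx)$, and for any such $\kk$ the node $\tk/w$ sits within $\Delta/w$ of $Q$, hence for $\xx \notin B$ and $w$ sufficiently large one has $\|w\xx - \tk\|_2 \geq wd/2$; the uniform decay~(\ref{eee}) applied with $\gamma = d/2$ then forces the supremum to zero. Since $\varphi$ is convex with $\varphi(0)=0$, the ratio $t \mapsto \varphi(t)/t$ is non-decreasing on $(0,+\infty)$, whence $\varphi(t) \leq \varphi(1)\, t$ for $0 \leq t \leq 1$; choosing $w$ large enough that $\lambda \|S_w f\|_{L^{\infty}(\R^n \setminus B)} \leq 1$ gives
\[
I^{(2)}_w\ \leq\ \varphi(1)\,\lambda \int_{\R^n \setminus B} |S_w f(\xx)|\, d\xx.
\]

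Second, I would establish the $L^1$-tail estimate. Bounding $|(S_w f)(\xx)| \leq \sum_{\kk} |\chi(w\xx - \tk)|\, |c_{\kk}|$ with $c_{\kk} := \tfrac{w^n}{A_{\kk}} \int_{R_{\kk}^w} f$, exchanging sum and integral, and changing variables $\uu = w\xx - \tk$, the same distance estimate restricts each resulting $\chi$-integral to $\{\|\uu\|_2 \geq wd/2\}$; using $A_{\kk} \geq \delta^n$ together with the fact that the cubes $R_{\kk}^w$ partition $\R^n$, one arrives at
\[
\int_{\R^n \setminus B} |S_w f(\xx)|\, d\xx\ \leq\ \frac{\|f\|_1}{\delta^n} \int_{\|\uu\|_2 \geq wd/2} |\chi(\uu)|\, d\uu,
\]
which vanishes as $w \to +\infty$ because $\chi \in L^1(\R^n)$. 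This closes the bound for $I^{(2)}_w$ and, combined with $I^{(1)}_w \to 0$, delivers the claimed norm convergence.
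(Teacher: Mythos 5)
The paper does not actually prove Theorem~\ref{norm_conv}: it is stated without proof and attributed to \cite{COVI}. Your argument is correct and is essentially the standard one used there -- split the modular into the integral over a bounded neighbourhood $B$ of $\mathrm{supp}(f)$, where the uniform convergence of Theorem~\ref{th1} together with $\varphi(0)=0$ and continuity of $\varphi$ does the work, and the integral over $\R^n\setminus B$, where only nodes near $\mathrm{supp}(f)$ contribute, so the distance bound $\left\|w\xx-\tk\right\|_2\geq wd/2$, the convexity estimate $\varphi(t)\leq\varphi(1)t$ for $t\leq 1$, and the absolute continuity of the integral of $\left|\chi\right|\in L^1(\R^n)$ kill the tail. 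The only cosmetic slip is that the node $\tk/w$ of a cell meeting $Q$ lies within $\sqrt{n}\,\Delta/w$ of $Q$ (the Euclidean diameter of $R^w_{\kk}$), not $\Delta/w$; this changes nothing since the bound still tends to $0$ and $wd/2$ still works for $w$ large.
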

Now, we recall the following modular continuity property for $S_w$, useful to prove the modular convergence for the above operators in Orlicz spaces.
\begin{theorem} \label{mod_cont} 
Let $\varphi$ be a convex $\varphi$-function. For every $f \in L^{\varphi}(\R^n)$ there holds
$$
I^{\varphi}[\lambda S_w f]\ \miu\ \frac{\left\|\chi\right\|_1}{\delta^n\cdot m_{0,\Pi^n}(\chi)}I^{\varphi}[\lambda m_{0,\Pi^n}(\chi)f],
$$
for some $\lambda>0$. In particular, $S_w$ maps $L^{\varphi}(\R^n)$ in $L^{\varphi}(\R^n)$.
\end{theorem}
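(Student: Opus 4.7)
The plan is to pointwise bound $\varphi(|\lambda (S_w f)(\xx)|)$ by applying Jensen's inequality twice (once to the discrete weighted sum defining $S_w$, once to the average of $|f|$ over $R_{\kk}^w$), then integrate in $\xx$ and exchange sum and integral by Tonelli. The factor $m_{0,\Pi^n}(\chi)$ enters because the weights $|\chi(w\xx-\tk)|$ do not form a probability measure but only satisfy $\sum_{\kk}|\chi(w\xx-\tk)|\miu m_{0,\Pi^n}(\chi)$ (Remark \ref{remark1}(a)); the factor $\delta^{-n}$ enters because the volumes $A_{\kk}/w^n$ of the boxes $R_{\kk}^w$ are bounded below using $\Delta_{k_i}\mau \delta$.

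First I would observe that, since $\varphi$ is non-decreasing,
$$
\varphi(|\lambda (S_w f)(\xx)|)\ \miu\ \varphi\!\left(\lambda\, m_{0,\Pi^n}(\chi)\sum_{\kk\in\Z^n}\frac{|\chi(w\xx-\tk)|}{m_{0,\Pi^n}(\chi)}\cdot\frac{w^n}{A_{\kk}}\!\int_{R^w_{\kk}}|f(\uu)|\,d\uu\right).
$$
The weights $p_{\kk}(\xx):=|\chi(w\xx-\tk)|/m_{0,\Pi^n}(\chi)$ are non-negative with $\sum_{\kk}p_{\kk}(\xx)\miu 1$; since $\varphi(0)=0$ and $\varphi$ is convex, the standard sub-probability version of Jensen's inequality (obtained by adding a dummy atom carrying the defect $1-\sum_{\kk}p_{\kk}(\xx)$ at the value $0$) yields
$$
\varphi(|\lambda (S_w f)(\xx)|)\ \miu\ \sum_{\kk}p_{\kk}(\xx)\,\varphi\!\left(\lambda\, m_{0,\Pi^n}(\chi)\cdot \frac{w^n}{A_{\kk}}\int_{R^w_{\kk}}|f(\uu)|\,d\uu\right).
$$
The inner quantity is precisely $\lambda\, m_{0,\Pi^n}(\chi)$ times the average of $|f|$ over the cube $R^w_{\kk}$ of volume $A_{\kk}/w^n$, so a second application of Jensen gives
$$
\varphi\!\left(\lambda\, m_{0,\Pi^n}(\chi)\cdot\frac{w^n}{A_{\kk}}\!\int_{R^w_{\kk}}|f(\uu)|\,d\uu\right)\ \miu\ \frac{w^n}{A_{\kk}}\!\int_{R^w_{\kk}}\!\varphi(\lambda\, m_{0,\Pi^n}(\chi)|f(\uu)|)\,d\uu.
$$

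Next I would integrate the pointwise bound on $\xx$, swap sum and integral by Tonelli (all terms are non-negative), and compute $\int_{\R^n}|\chi(w\xx-\tk)|\,d\xx=\|\chi\|_1/w^n$ by the translation-invariant change of variables $\underline{y}=w\xx-\tk$. This collapses the $w^n$ factors and produces
$$
I^{\varphi}[\lambda S_w f]\ \miu\ \frac{\|\chi\|_1}{m_{0,\Pi^n}(\chi)}\sum_{\kk\in\Z^n}\frac{1}{A_{\kk}}\int_{R^w_{\kk}}\varphi(\lambda\, m_{0,\Pi^n}(\chi)|f(\uu)|)\,d\uu.
$$
Using $A_{\kk}=\Delta_{k_1}\cdots\Delta_{k_n}\mau \delta^n$ and the fact that $\{R^w_{\kk}\}_{\kk\in\Z^n}$ tile $\R^n$ with pairwise disjoint interiors, the remaining sum reassembles into the modular $I^{\varphi}[\lambda\, m_{0,\Pi^n}(\chi)f]$, yielding the claimed inequality for every $\lambda>0$.

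The main (and essentially the only non-routine) obstacle is the sub-probability version of Jensen at the first step, which forces the factor $m_{0,\Pi^n}(\chi)$ to be absorbed inside $\varphi$ rather than as a linear prefactor; this is what produces the argument $\lambda\, m_{0,\Pi^n}(\chi)f$ on the right-hand side and is the reason why one obtains a modular continuity estimate with a dilation, not an invariance. The assertion $S_w:L^{\varphi}(\R^n)\to L^{\varphi}(\R^n)$ follows at once: given $f\in L^{\varphi}(\R^n)$, choose $\lambda_0>0$ with $I^{\varphi}[\lambda_0 f]<+\infty$ and apply the inequality with $\lambda:=\lambda_0/m_{0,\Pi^n}(\chi)$, so that $I^{\varphi}[\lambda S_w f]\miu(\|\chi\|_1/(\delta^n m_{0,\Pi^n}(\chi)))\,I^{\varphi}[\lambda_0 f]<+\infty$.
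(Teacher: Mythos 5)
Your proof is correct and is essentially the standard argument for this result: the paper itself states the theorem without proof (deferring to \cite{COVI}), and the proof given there proceeds exactly as you do, via the sub-probability form of Jensen's inequality on the discrete sum (using $\varphi(0)=0$ and convexity to absorb $m_{0,\Pi^n}(\chi)$ into the argument of $\varphi$), a second Jensen on the normalized averages over $R^w_{\kk}$, and then Tonelli with the change of variables giving $\int_{\R^n}|\chi(w\xx-\tk)|\,d\xx=\|\chi\|_1 w^{-n}$, the bound $A_{\kk}\mau\delta^n$, and the tiling of $\R^n$ by the boxes $R^w_{\kk}$. Your concluding remark on how the mapping property $S_w:L^{\varphi}(\R^n)\to L^{\varphi}(\R^n)$ follows by choosing $\lambda=\lambda_0/m_{0,\Pi^n}(\chi)$ is also the intended reading of the ``for some $\lambda>0$'' in the statement.
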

\noindent Now, the main result of this section follows (see \cite{COVI}).
\begin{theorem} \label{th2}
Let $\varphi$ be a convex $\varphi$-function. For every $f \in L^{\varphi}(\R^n)$, there exists $\lambda>0$ such that
$$
\lim_{w \to +\infty} I^{\varphi}[\lambda (S_w f-f)]\ =\ 0.
$$
\end{theorem}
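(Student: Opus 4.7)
The plan is to use a standard three-term density argument in Orlicz spaces that chains together the three preceding results: Lemma \ref{lemma1} (modular density of $C_c(\R^n)$), Theorem \ref{norm_conv} (norm convergence on the test class $C_c(\R^n)$), and Theorem \ref{mod_cont} (modular continuity of $S_w$). The idea is to approximate $f \in L^{\varphi}(\R^n)$ by a function $g \in C_c(\R^n)$ in the modular sense, split
$$\lambda(S_w f - f) \;=\; \lambda\, S_w(f-g) \;+\; \lambda(S_w g - g) \;+\; \lambda(g-f),$$
and then treat each of the three pieces with the appropriate tool.

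Concretely, I would first fix $\ep>0$ and apply Lemma \ref{lemma1} to produce $\bar\lambda>0$ and $g\in C_c(\R^n)$ with $I^{\varphi}[\bar\lambda(f-g)] < \ep$. Then I would choose, once and for all, $\lambda := \bar\lambda / (3\, m_{0,\Pi^n}(\chi))$; note that $m_{0,\Pi^n}(\chi)\geq 1$ by $(\chi 2)$, so in particular $3\lambda \leq \bar\lambda$. Using convexity of $\varphi$ (in the form $\varphi(a+b+c) \leq \tfrac{1}{3}[\varphi(3a)+\varphi(3b)+\varphi(3c)]$) I would obtain
$$I^{\varphi}[\lambda(S_w f-f)] \;\leq\; \tfrac{1}{3}I^{\varphi}[3\lambda\, S_w(f-g)] + \tfrac{1}{3}I^{\varphi}[3\lambda(S_w g - g)] + \tfrac{1}{3}I^{\varphi}[3\lambda(g-f)].$$
The first summand is controlled by Theorem \ref{mod_cont} (applied to $f-g$), giving a bound proportional to $I^{\varphi}[3\lambda\, m_{0,\Pi^n}(\chi)(f-g)] = I^{\varphi}[\bar\lambda(f-g)] < \ep$. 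The third summand is bounded by monotonicity of $\varphi$ and $3\lambda\leq\bar\lambda$ by the same $\ep$. The middle summand vanishes as $w\to+\infty$ by Theorem \ref{norm_conv}: norm convergence $\|S_w g - g\|_{\varphi}\to 0$ entails $I^{\varphi}[3\lambda(S_w g - g)] \to 0$ at every scale, in particular at the fixed scale $3\lambda$.

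The main obstacle I expect is the careful bookkeeping of the scaling parameter $\lambda$: modular convergence only demands that some single $\lambda>0$ work simultaneously for every term in the decomposition, yet Lemma \ref{lemma1} delivers a $\bar\lambda$ that is dictated by $f$ and $\ep$, while the modular continuity estimate of Theorem \ref{mod_cont} forces a further rescaling by $m_{0,\Pi^n}(\chi)$. The choice $\lambda = \bar\lambda/(3\, m_{0,\Pi^n}(\chi))$ is exactly what makes the three bounds compatible, and once this calibration is in place everything reduces to routine monotonicity/convexity manipulations together with the previously established convergence and continuity results. Passing $w\to+\infty$ and then letting $\ep\to 0^+$ yields the claim.
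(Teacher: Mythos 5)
Your proposal is correct and follows essentially the same route as the paper: modular density of $C_c(\R^n)$ (Lemma \ref{lemma1}), the three-term split handled by the modular continuity estimate of Theorem \ref{mod_cont}, monotonicity of $I^{\varphi}$, and norm convergence on $C_c(\R^n)$ (Theorem \ref{norm_conv}). The only cosmetic difference is your explicit choice $\lambda=\overline{\lambda}/(3\,m_{0,\Pi^n}(\chi))$ (justified by $m_{0,\Pi^n}(\chi)\ge 1$ from $(\chi 2)$) where the paper takes any $\lambda$ with $3\lambda(1+m_{0,\Pi^n}(\chi))\le\overline{\lambda}$; both calibrations work.
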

\begin{proof}
Let $f \in L^{\varphi}(\R^n)$ and $\ep>0$ be fixed. By Lemma \ref{lemma1}, there exists $\overline{\lambda}>0$ and $g \in C_c(\R^n)$ such that $I^{\varphi}[\overline{\lambda} (f-g)]\!<\! \ep$. Let now $\lambda>0$ such that $3\lambda(1 + m_{0,\Pi^n}(\chi)) \miu \overline{\lambda}$. By the properties of $\varphi$ and Theorem \ref{mod_cont}, we have
\begin{eqnarray*}
&& \hskip-0.7cm I^{\varphi}[\lambda(S_w f-f)]\ \miu\ I^{\varphi}[3\lambda(S_w f - S_w g)]+I^{\varphi}[3\lambda(S_w g - g)] + I^{\varphi}[3\lambda(f-g)]\\
&\miu&\ \frac{1}{m_{0,\Pi^n}(\chi)\cdot \delta^n}\left\|\chi \right\|_1 I^{\varphi}[\overline{\lambda} (f-g)] + I^{\varphi}[3\lambda(S_w g - g)] + I^{\varphi}[\overline{\lambda} (f-g)] \\
&<&\ \left(\frac{1}{m_{0,\Pi^n}(\chi) \cdot \delta^n}\left\|\chi \right\|_1+1\right) \ep + I^{\varphi}[3\lambda(S_w g - g)].
\end{eqnarray*}
The assertion follows from Theorem \ref{norm_conv}.
\end{proof} 
The setting of Orlicz spaces allows us to give a unitary approach for the reconstruction since we may obtain convergence results for particular cases of Orlicz spaces. For instance, choosing $\varphi(u)=u^p$, $1 \leq p < +\infty$, we have that $L^{\varphi}(\R^n) = L^p(\R^n)$ and $I^{\varphi}[f]=\|f\|^p_{p}$, where $\| \cdot \|_p$ is the usual $L^p$-norm. Then, from Theorem \ref{mod_cont} and Theorem \ref{th2} we obtain the following corollary.
\begin{corollary}
For every $f \in L^p(\R^n)$, $1 \leq p < +\infty$, 
the following inequality holds:
\begin{displaymath}
\left\|S_w f\right\|_p\ \miu\ \delta^{-n/p}\, [m_{0,\Pi^n}(\chi)]^{(p-1)/p}\, \left\|\chi\right\|^{1/p}_1\, \left\|f\right\|_p.
\end{displaymath}
Moreover, we have:
$$
\lim_{w \to +\infty} \|S_w f-f\|_p\ =\ 0.
$$
\end{corollary}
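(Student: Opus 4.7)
The plan is to specialize Theorems \ref{mod_cont} and \ref{th2} to the convex $\varphi$-function $\varphi(u)=u^p$, $1\miu p<+\infty$. Here $L^{\varphi}(\R^n)=L^p(\R^n)$ and $I^{\varphi}[f]=\|f\|_p^p$, and crucially this modular is homogeneous of degree $p$, i.e.\ $I^{\varphi}[\mu g]=\mu^p\,I^{\varphi}[g]$ for every $\mu>0$. This homogeneity is what will allow multiplicative constants to be factored out cleanly. Moreover, $\varphi(u)=u^p$ satisfies the $\Delta_2$-condition, so on $L^p(\R^n)$ modular convergence and norm convergence coincide.

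For the norm inequality, I would apply Theorem \ref{mod_cont} with this choice of $\varphi$. It supplies some $\lambda>0$ for which
$$
\lambda^p\|S_w f\|_p^p\ =\ I^{\varphi}[\lambda S_w f]\ \miu\ \frac{\|\chi\|_1}{\delta^n\, m_{0,\Pi^n}(\chi)}\, I^{\varphi}\bigl[\lambda\, m_{0,\Pi^n}(\chi)\, f\bigr]\ =\ \frac{\|\chi\|_1\,[m_{0,\Pi^n}(\chi)]^{p-1}}{\delta^n}\,\lambda^p\,\|f\|_p^p.
$$
Cancelling $\lambda^p$ on both sides (legitimate since $\lambda$ is a fixed positive constant) and extracting a $p$-th root yields exactly the stated bound $\|S_w f\|_p\miu \delta^{-n/p}[m_{0,\Pi^n}(\chi)]^{(p-1)/p}\|\chi\|_1^{1/p}\|f\|_p$. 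In particular, $S_w$ is a bounded linear operator on $L^p(\R^n)$.

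For the convergence, Theorem \ref{th2} guarantees some $\lambda>0$ such that $I^{\varphi}[\lambda(S_w f-f)]\to 0$ as $w\to+\infty$, which for $\varphi(u)=u^p$ reads $\lambda^p\|S_w f-f\|_p^p\to 0$. Since $\lambda$ is a fixed positive constant, this is equivalent to $\|S_w f-f\|_p\to 0$; alternatively, one simply invokes the $\Delta_2$-condition, which upgrades the modular convergence given by Theorem \ref{th2} to norm convergence. There is no real obstacle here: the entire argument is a direct translation of the Orlicz-space results into $L^p$ language, and the $(p-1)/p$ exponent on $m_{0,\Pi^n}(\chi)$ arises naturally from combining the factor $[m_{0,\Pi^n}(\chi)]^{-1}$ outside the modular in Theorem \ref{mod_cont} with the factor $[m_{0,\Pi^n}(\chi)]^p$ produced by the homogeneity of $I^{\varphi}$.
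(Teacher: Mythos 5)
Your proposal is correct and follows exactly the route the paper intends: the paper gives no separate proof, simply asserting that the corollary follows from Theorems \ref{mod_cont} and \ref{th2} by taking $\varphi(u)=u^p$, and your computation (using the degree-$p$ homogeneity of $I^{\varphi}$ to cancel $\lambda$ and produce the exponent $(p-1)/p$) is precisely the omitted verification.
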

The corollary above, allows us to reconstruct $L^p$-signals (in $L^p$-sense), therefore signals/images not necessarily continuous. Other examples of Orlicz spaces for which the above theory can be applied can be found e.g., in \cite{MUORL,MU1,BAMUVI,BABUSTVI,COVI}. The theory of sampling Kantorovich operators in the general setting of Orlicz spaces allows us to obtain, by means of a unified treatment, several applications in many different contexts.


\section{Examples of special kernels} \label{sec5}

One important fact in our theory is the choice of the kernels, which influence the order of approximation that can be achieved by our operators (see e.g. \cite{COVI3} in one-dimensional setting).

  For instance, one  can take into consideration  {\em radial kernels}, i.e., functions for which the value depends on the Euclidean norm of the argument only. Example of such a kernel can be given, for example, by the Bochner-Riesz kernel, defined as follows $b^{\alpha}(\underline{x}):=2^{\alpha}\Gamma(\alpha+1)\|\underline{x}\|_2^{-(n/2)+\alpha}{\cal B}_{(n/2)+\alpha}(\|\underline{x}\|_2)$,
for $\underline{x} \in \R^n$, where $\alpha > (n-1)/2$, ${\cal B}_{\lambda}$ is the Bessel function of order $\lambda$ and $\Gamma$ is the Euler function. For more  details about this matter,  see e.g. \cite{BUFIST}. 

  To construct, in general, kernels satisfying all the assumptions $(\chi_i)$, $i=1,2,3$ is not very easy.

 For this reason, here we show a procedure useful to construct examples using product of univariate kernels, see e.g. \cite{BUFIST,COVI,COVI2}. In this case, we consider the case of uniform sampling scheme, i.e., $t_{\underline{k}}=\underline{k}$.

 Denote by $\chi_1, ..., \chi_n$, the univariate functions $\chi_i : \R \to \R$, $\chi_i \in L^1(\R)$, satisfying
the following assumptions:
\begin{equation} \label{uno-dim-}
m_{\beta,\Pi^1}(\chi_i)\ :=\ \sup_{u \in \R}\sum_{k \in \Z}\left|\chi_i(u-k)\right| |u-k|^{\beta}\ <\ +\infty,
\end{equation}
for some $\beta >0$, $\chi_i$ is bounded in a neighborhood of the origin and 
\begin{equation} \label{sing}
\sum_{k \in \Z}\chi_i(u-k) =1,
\end{equation}
for every $u \in \R$, for $i=1,...,n$. 
Now, setting $\chi(\underline{x}) := \prod_{i=1}^n\chi_i(x_i)$, $\underline{x}=(x_1,...,x_n) \in \R^n$,
it is easy to prove that $\chi$ is a multivariate kernel for the operators $S_w$ satisfying all the assumptions of our theory, see e.g., \cite{BUFIST,COVI}. 

  As a first example, consider the univariate Fej\'{e}r's kernel defined by $F(x)\ :=\ \frac{1}{2}\mbox{sinc}^2\left(\frac{x}{2}\right)$, $x \in \R$, where the sinc function is given by
\begin{displaymath}
\mbox{sinc}(x)\ :=\ \left\{
\begin{array}{l}
\disp \frac{\sin \pi x}{\pi x}, \hskip1cm x \in \R\setminus \left\{0\right\}, \\
\hskip0.5cm 1, \hskip1.5cm x=0.
\end{array}
\right.
\end{displaymath}
Clearly, $F$ is bounded, belongs to $L^1(\R)$ and satisfies the moment conditions $(\ref{uno-dim-})$ for $\beta = 1$, as shown in \cite{BUNE,BABUSTVI,COVI}. 
Furthermore, taking into account that the Fourier transform of $F$ is given by (see \cite{BUNE})
\begin{displaymath}
\widehat{F}(v) :=\ \left\{
\begin{array}{l}
1-|v/\pi|,\ \hskip0.5cm |v|\miu \pi, \\
0,\ \hskip1.85cm |v|>\pi,
\end{array}
\right.
\end{displaymath}
we obtain by Remark \ref{remark3} that condition $(\ref{sing})$ is fulfilled. Then, we can define $\disp \mathcal{F}_n(\xx)= \prod^n_{i=1}F(x_i)$, $\underline{x}=(x_1,...,x_n) \in \R^n,$ the multivariate Fej\'{e}r's kernel, satisfying the condition upon a multivariate kernel. The Fej\'{e}r's kernel $F$ and the bivariate Fej\'{e}r's kernel $F(x)\cdot F(y)$ are plotted in Figure \ref{fig1}.
\begin{figure}
\centering
\includegraphics[scale=0.22]{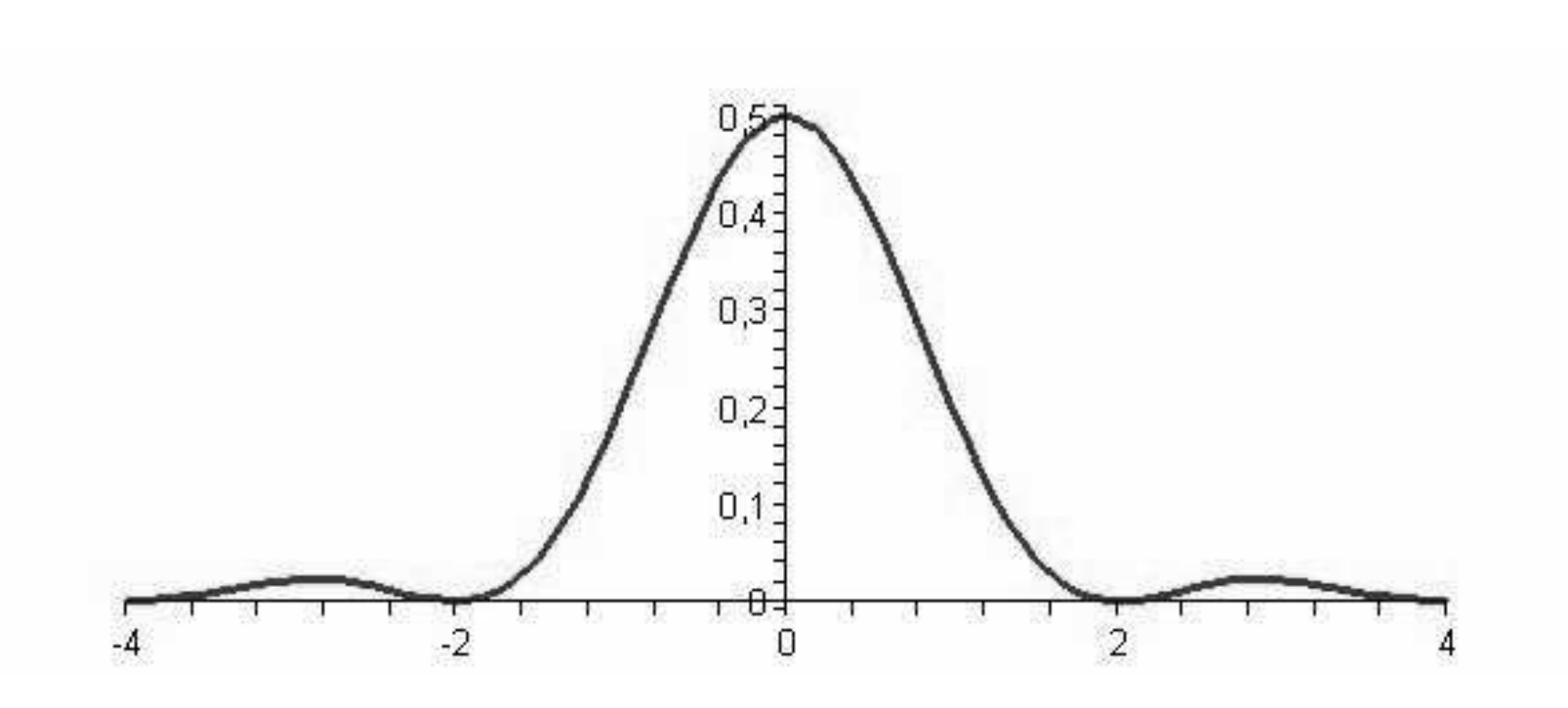}
\hskip0.8cm
\includegraphics[scale=0.22]{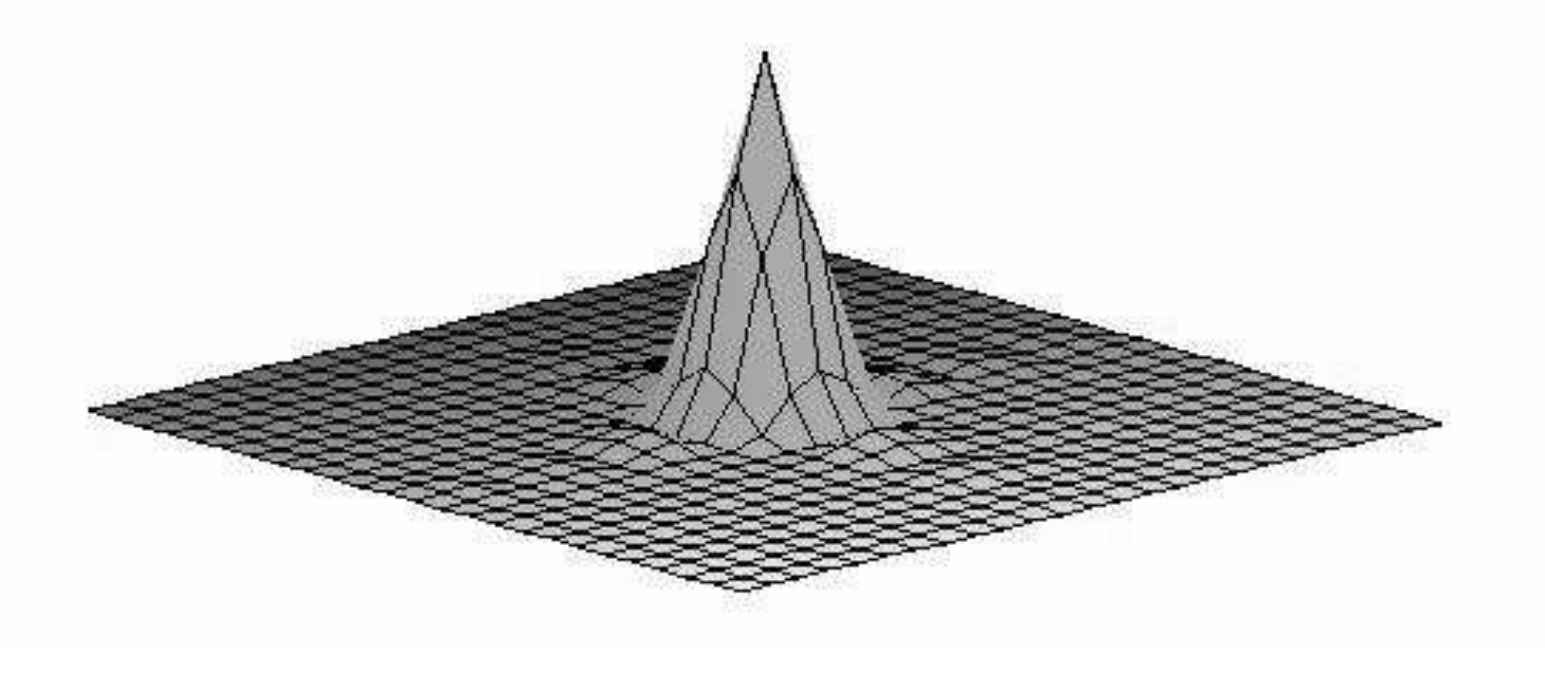}
\caption{\small{The Fej\'{e}r's kernel $F$ (left) and the bivariate Fej\'{e}r's kernel ${\cal F}(x,y)$ (right).}} \label{fig1}
\end{figure}

  The Fej\'{e}r's kernel $\mathcal{F}_n$ is an example of kernel with unbounded support, then to evaluate our sampling Kantorovich series at any given $\xx \in \R^n$, we need of an infinite number of mean values $w^n \int_{R^w_{\kk}}f(\uu)\ d\uu$. However, if the function $f$ has compact support, this problem does not arise. In case of function having unbounded support the infinite sampling series must be truncated to a finite one, which leads to the so-called truncation error. In order to avoid the truncation error, one can take kernels $\chi$ with bounded support. Remarkable examples of kernels with compact support, can be constructed using the well-known central B-spline of order $k \in \N$, defined by
$$
M_k(x) :=\ \frac{1}{(k-1)!}\sum^k_{i=0}(-1)^i \left(\begin{array}{l} \!\! 
k\\
\hskip-0.1cm i
\end{array} \!\! \right)
\left(\frac{k}{2}+x-i\right)^{k-1}_+,
$$
where the function $(x)_+ := \max\left\{x,0\right\}$ denotes the positive part of $x \in \R$ (see \cite{BABUSTVI,VIZA1,COVI}). The central B-spline $M_3$ and the bivariate B-spline kernel $M_3(x)\cdot M_3(y)$ are plotted in Figure \ref{fig2}.
\begin{figure}
\centering
\includegraphics[scale=0.22]{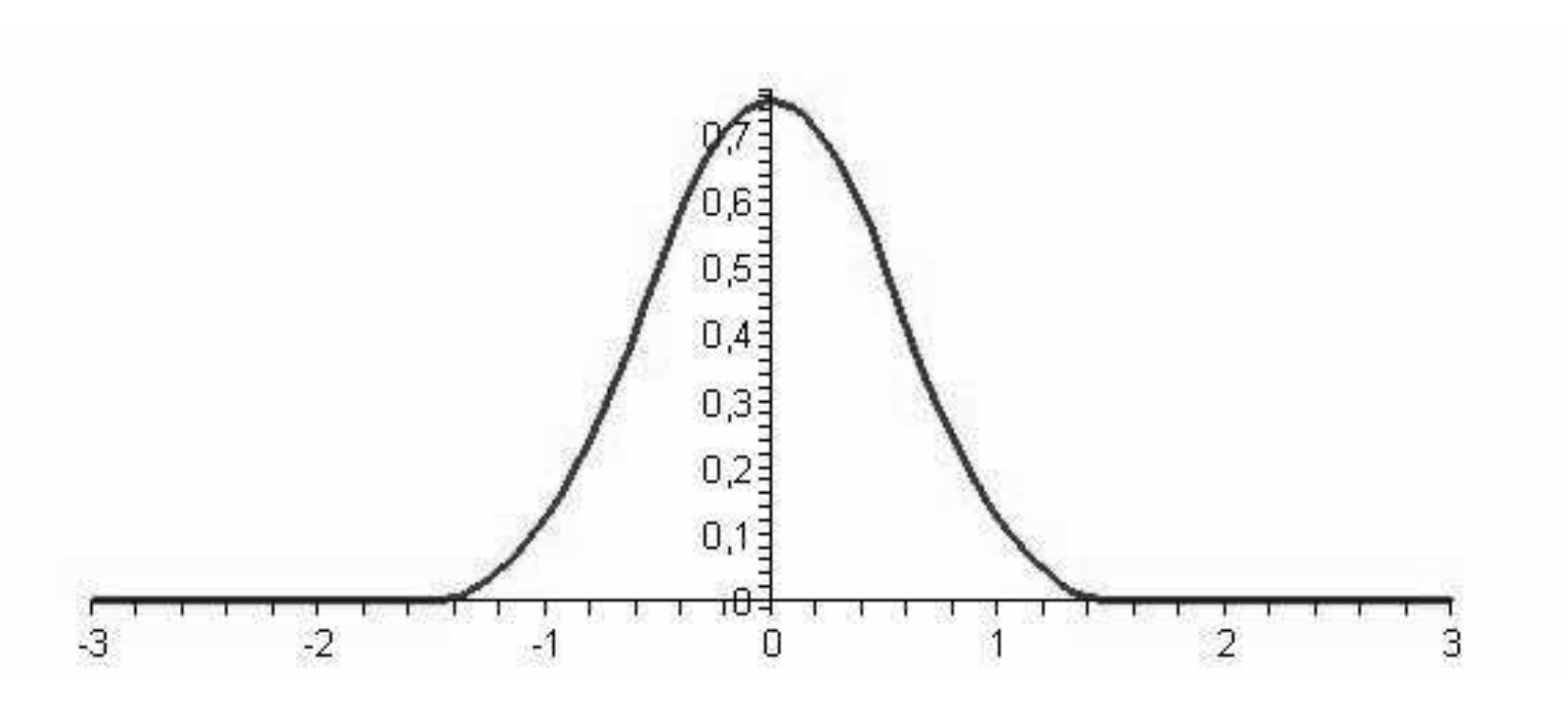}
\hskip0.7cm
\includegraphics[scale=0.22]{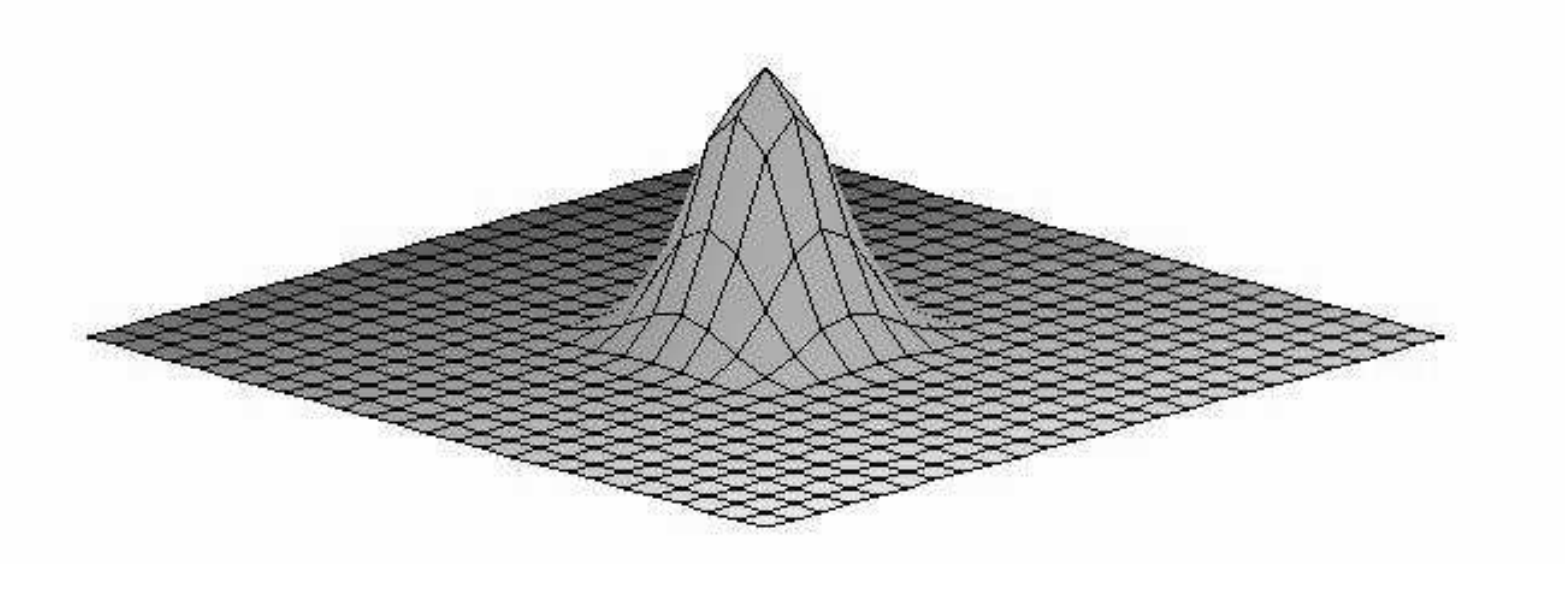}
\caption{\small{The B-spline $M_3$ (left) and the bivariate B-spline ${\cal M}^2_3(x,y)$ (right).}} \label{fig2}
\end{figure}
We have that, the Fourier transform of $M_k$ is given by
$\widehat{M_k}(v)\ :=\ \mbox{sinc}^n\left( \frac{v}{2 \pi} \right)$, $v \in \R$, and then, if we consider the case of the uniform spaced sampling scheme, condition (\ref{sing}) is satisfied by Remark \ref{remark3}. Clearly, $M_k$ are bounded on $\R$, with compact support $[-n/2,n/2]$, and hence $M_k \in L^1(\R)$, for all $k \in \N$. Moreover, it easy to deduce that condition $(\ref{uno-dim-})$ is fulfilled for every $\beta>0$, see \cite{BABUSTVI}.
Hence $\mathcal{M}^n_k(\xx)\ :=\ \prod^n_{i=1}M_k(x_i)$, $\underline{x}=(x_1,...,x_n) \in \R^n$,
is the multivariate B-spline kernel of order $k \in \N$.

  Finally, other important examples of univariate kernels are given by the Jackson-type kernels, defined by
$J_k(x)=c_k\, \mbox{sinc}^{2k}\left(\frac{x}{2k\pi\alpha}\right)$, $x \in  \R$,
with $k \in \N$, $\alpha \geq 1$, where the normalization coefficients $c_k$ are given by
$c_k\ :=\ \left[ \int_{\R} \mbox{sinc}^{2k}\left(\frac{u}{2 k \pi \alpha} \right) \, du \right]^{-1}$.
Since $J_k$ are band-limited to $[-1/\alpha,\, 1/\alpha]$, i.e., their Fourier transform vanishes outside this interval, condition $(\chi 2)$ is satisfied by Remark \ref{remark3} and $(\ref{uno-dim-})$ is satisfied since $J_k(x)=\mathcal{O}(|x|^{-2k})$, as $x \to \pm \infty$, $k \in \N$. In similar manner, by the previous procedure we can construct a multivariate version of Jackson kernels. For more details about Jackson-type kernels, and for others useful examples of kernels see e.g. \cite{BUNE,BAMUVI,BABUSTVI0,BABUSTVI}.
%


\section{The sampling Kantorovich algorithm for image reconstruction}

In this section, we show applications of the multivariate sampling Kantorovich operators to image reconstruction. First of all, we recall that every bi-dimensional gray scale image $A$ is represented by a suitable matrix and can be modeled as a step function $I$, with compact support, belonging to $L^p(\R^2)$, $1 \leq p <+\infty$. The definition of $I$ arise naturally as follows:
$$
I(x,y)\ :=\ \sum^{m}_{i=1}\sum^{m}_{j=1}a_{ij} \cdot \textbf{1}_{ij}(x,y), \hskip0.3cm (x,y) \in \R^2,
$$
where ${\bf 1}_{ij}(x,y)$, $i,j =1,2,...,m$, are the characteristics functions of the sets $(i-1,\ i]\times(j-1,\ j]$ (i.e. ${\bf 1}_{ij}(x,y)=1$, for $(x,y)\ \in\ (i-1,\ i]\times(j-1,\ j]$ and ${\bf 1}_{ij}(x,y)=0$ otherwise). 

  The above function $I(x,y)$ is defined in such a way that, to every pixel $(i,j)$ it is associated the corresponding gray level $a_{ij}$. 

  We can now consider approximation of the original image by the bivariate sampling Kantorovich operators $(S_w I)_{w>0}$ based upon some kernel $\chi$. 

  Then, in order to obtain a new image (matrix) that approximates in $L^p$-sense the original one, it is sufficient to sample $S_w I$ (for some $w>0$) with a fixed sampling rate. In particular, we can reconstruct the approximating images (matrices) taking into consideration different sampling rates and this is possible since we know the analytic expression of $S_w I$.

 If the sampling rate is chosen higher than the original sampling rate, one can get a new image that has a better resolution than the original one's. The above procedure has been implemented by using MATLAB and tools of matrix computation in order to obtain an optimized algorithm based on the multivariate sampling Kantorovich theory.

  In the next sections, examples of thermographic images reconstructed by the sampling Kantorovich operators will be given to show the main applications of the theory to civil engineering. In particular, a real world case-study is analyzed in term of modal analysis to obtain a model useful to study the response of the building under seismic action.
%

%
%
%
%
%
%
%
%
%
%
%

\section{An application of thermography to civil engineering}

In the present application, thermographic images will be used to locate the resisting elements and to define their dimensions, and moreover to investigate the actual texture of the masonry wall, i.e., the arrangement of blocks (bricks and/or stones) and mortar joints. 
In general, thermographic images have a resolution too low to accurately analyze  the texture of the masonries,  therefore it has to be increased by means of suitable tools.

In order to obtain a consistent separation of the phases, that is a correct identification of the pixel which belong to the blocks and those who belong to mortar joints, the image is converted from gray-scale representation to black-and-white (binary) representation by means of an image texture algorithm, which employs techniques belonging to the field of digital image processing. 
The image texture algorithm, described in details in \cite{CACLGU}, leads to areas of white pixels identified as blocks and areas of black pixels identified as mortar joints.
However, the direct application of the image texture algorithm to the thermographic images  (see Figure~\ref{fig:original_ric}), can produce errors, as an incorrect separation between the bricks and the mortar (see Figure~\ref{fig:original_BW}). 
Therefore, we can use the sampling Kantorovich operators to process the thermographic images. 
In particular, here we used the operators $S_w$ based upon the bivariate Jackson-type kernel with $k=12$ (see Section \ref{sec5}) and the parameter $w=40$ (see Figure~\ref{fig:reconstructed_ric}). 
The application of the image texture algorithm produces a consistent separation of the phases (see Figure~\ref{fig:reconstructed_BW}).

\begin{figure}[!ht]
\centering
\subfigure[]{ \includegraphics[width = 0.2\textwidth]{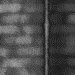} \label{fig:original_ric}}
\hskip0.15cm
\subfigure[]{ \includegraphics[width = 0.2\textwidth]{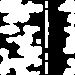} \label{fig:original_BW}}
\hskip0.15cm 
\subfigure[]{ \includegraphics[width = 0.2\textwidth]{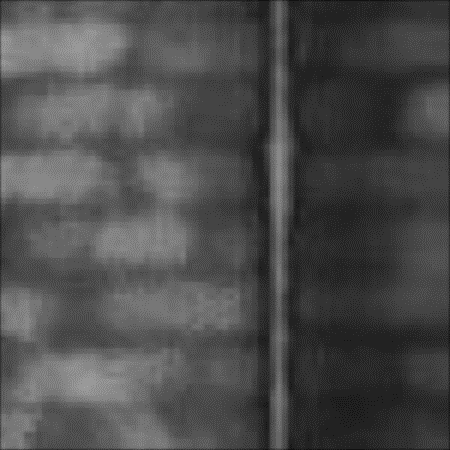} \label{fig:reconstructed_ric}}
\hskip0.15cm
\subfigure[]{ \includegraphics[width = 0.2\textwidth]{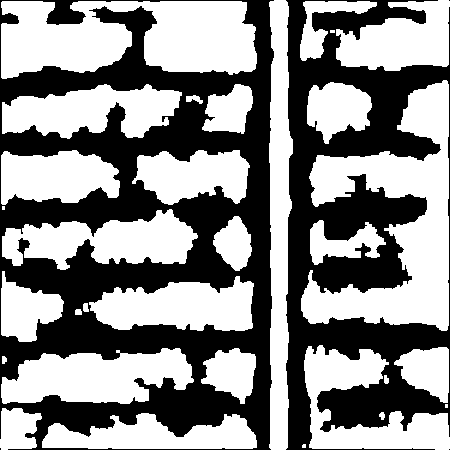} \label{fig:reconstructed_BW}} 
\caption{{\small (a) Original thermographic image ($75\times 75$ pixel resolution) and (b) its texture; (c) Reconstructed thermographic image ($450\times 450$ pixel resolution) and (d) its texture.}}
\label{fig:original}
\end{figure}

In order to perform structural analysis, the mechanical characteristics of an homogeneous material equivalent to the original heterogeneous material are sought. 
The equivalence is in the sense that, when subjected to the same boundary conditions (b.c.), the overall responses in terms of mean values of stresses and deformations are the same.
The equivalent elastic properties are estimated by means of the ``test-window'' method \cite{CLGU}.

\section{Case-study}

The image texture algorithm described previously  has been used to analyze a real-world case-study: a building  consisting of two levels and an attic, with a very simple architectural plan, a rectangle with sides 11 m and 11.4 m (see Figure~\ref{fig:pianta_prospetto}). 
The vertical structural elements consist of masonry walls. 
At first level the masonry walls have thickness of 40 cm with blocks made of  stones, while at second level the masonry is made of squared tuff block and it has thickness of about 35 cm. 
Both surfaces of the walls are plastered.
The slab can be assumed to be rigid in the horizontal plane and 
the building is placed in a medium risk seismic area. 
According to a preliminary visual survey, the structure does not have strong asymmetries between principal directions and the distribution of the masses are quite uniform.

\begin{figure}[!ht]
\centering
\includegraphics[width = 0.75\textwidth]{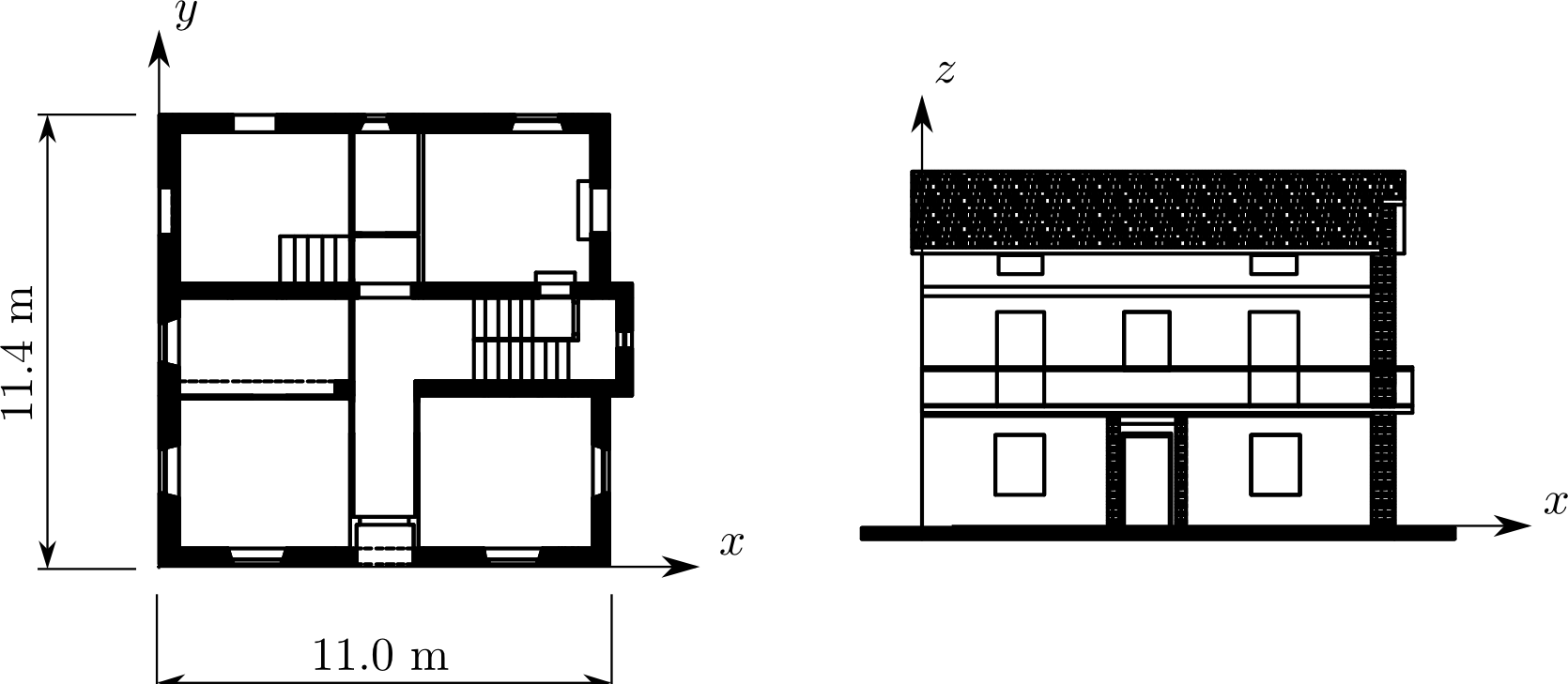} 
\caption{{\small Case-study building: plant (left) and main facade (right).}}
\label{fig:pianta_prospetto}
\end{figure}

The response of the building is evaluated by means of three different models.
In the first model, only information that can be gathered by the visual survey have been used, and the characteristics of material has been assume according to Italian Building Code. 
In the second model information concerning the actual geometry of vertical structural elements acquired by means of thermographic survey have been used; 
for example, the survey showed that the openings at ground level in the main facade were once larger and have been subsequently partially filled with non-structural masonry, and thus the dimensions of all the masonry walls in the facade have to be reduced.
Eventually, in the third model information about the actual texture of masonries of ground level (chaotic masonry, Fig.~\ref{fig:chaotic}) and of second level (periodic masonry, Fig.~\ref{fig:periodic}) have been used. 
In particular, the actual textures where established using the reconstructed thermographic images. 

\begin{figure}[!ht]
\centering
\subfigure[]{ \includegraphics[width = 0.2\textwidth]{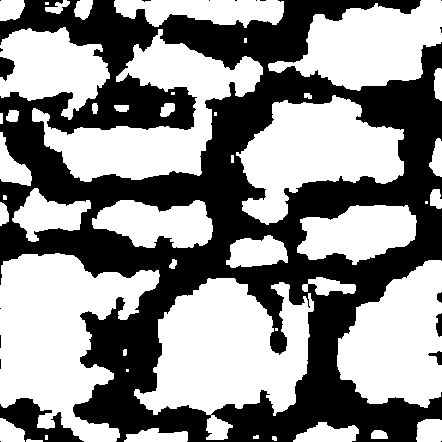} \label{fig:chaotic} }
\hskip0.8cm
\subfigure[]{ \includegraphics[width = 0.2\textwidth]{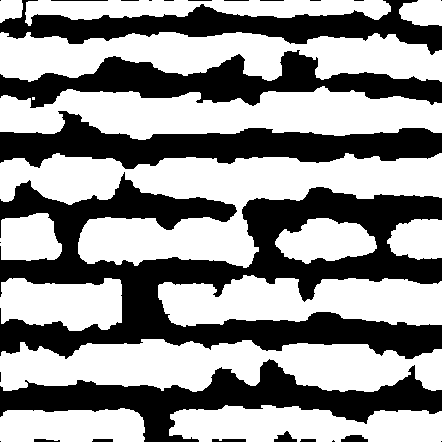} \label{fig:periodic}} 
\caption{{\small Texture of (a) ground level and (b) second level masonries.}}
\label{fig:masonries}
\end{figure}

The main characteristics of the models are reported in Table~\ref{tab:char_models}.
For the mechanical characteristic the Young's modulus $E$ and the shear modulus $G$ are shown.
In the third model, the following mechanical characteristics (Young's modulus $E$ and Poisson's ratio $\nu$) of the constituent phases have been used: for the stones of the ground level, $E = 25000\,\mathrm{N\,mm^{-2}}, \nu = 0.2$, for the bricks of the second level $E = 1700\,\mathrm{N\,mm^{-2}}, \nu = 0.2$, for the mortar of both levels  $E = 2500\,\mathrm{N\,mm^{-2}}, \nu = 0.2$.

\begin{table}[!ht]
\center
{\small
\begin{tabular}{c|c|cc|cc}
\hline
 & & \multicolumn{2}{|c}{ground level} & \multicolumn{2}{|c}{second level}\\
 & geometry & $E$ & $G$ & $E$ & $G$ \\
 & & $\mathrm{N\,mm^{-2}}$ & $\mathrm{N\,mm^{-2}}$ & $\mathrm{N\,mm^{-2}}$ & $\mathrm{N\,mm^{-2}}$ \\
\hline
 Model \#1 & visual survey & 3346 & 1115 & 1620 & 540 \\
 Model \#2 & thermogr. survey & 3346 & 1115 & 1620 & 540 \\
 Model \#3 & thermogr. survey & 7050 & 2957 & 1996 & 833 \\
 \hline\end{tabular}
}
\caption{{\small Geometry and masonries' mechanical characteristics in the models.}}
\label{tab:char_models}
\end{table}

The behavior under seismic actions is estimated by means of modal analysis, using a commercial code based on the Finite Element Method.
The periods of the first three modes and the corresponding mass participating ratio  for the two principal direction of seismic action are reported in Tab.~\ref{tab:periodi_masse}.
For all the models, the first mode is along the $y$ axis, the second is along the $x$ axis, and the third is mainly torsional. 
Anyway, the second mode is not a pure translational one since the asymmetry in the walls distribution produces also a torsional component. 

\begin{table}[h!]
\center
{\small
\begin{tabular}{c|p{0.7cm}p{0.7cm}p{0.7cm}|p{0.7cm}p{0.7cm}p{0.7cm}|p{0.7cm}p{0.7cm}p{0.7cm}}
\hline
& \multicolumn{3}{|c|}{Periods}
 & \multicolumn{6}{|c}{Mass partecipating ratio} \\
 & \multicolumn{3}{|c|}{ }
  & \multicolumn{3}{|c|}{ $x$ direction} &    \multicolumn{3}{|c}{$y$ direction} \\
 & 1st mode & 2nd mode & 3rd mode  & 1st mode & 2nd mode & 3rd mode  & 1st mode & 2nd mode & 3rd mode\\
\hline
 Model \#1 & 0.22 & 0.15 & 0.13 & 0.00 & 0.64 & 0.13 & 0.73 & 0.00 & 0.00 \\
 Model \#2 & 0.22 & 0.14 & 0.13 & 0.00 & 0.51 & 0.27 & 0.73 & 0.01 & 0.00 \\
 Model \#3 & 0.18 & 0.13 & 0.11 & 0.00 & 0.46 & 0.26 & 0.68 & 0.01 & 0.00 \\
 \hline\end{tabular}
}
\caption{{\small Periods and mass participating ratio for two directions of seismic action of the first three modes .}}
\label{tab:periodi_masse}
\end{table}

As can be noted, the first two models have the same period for the fundamental mode in $y$ direction; nevertheless, in $x$ direction there is a slight difference due to the reduced dimensions of masonry walls width discovered by means of thermographic survey. In fact, the reduction of dimensions leads to a decrease of global stiffness while the total mass is almost the same.  
The periods of third model show a reduction of about 20\% due to  the greater value of equivalent elastic moduli estimated by means of homogenized texture.
As already noted, for seismic action in $x$ direction the structure response is dominated by second mode, with also a significant contribution from the third mode (which is torsional); for seismic action in $y$ the response is dominated by first mode.
It is also worth noting that Model \#3 shows a reduced mass participating ratio for the fundamental mode in each direction, and therefore a greater number of modes should be considered in the evaluation of seismic response in order to achieve a suitable accuracy.


\section{Concluding remarks}

The sampling Kantorovich operators and the corresponding MATLAB algorithm for image reconstruction are very useful to enhance the quality of thermographic images of portions of masonry walls. In particular, after the processing by sampling Kantorovich algorithm, the thermographic image has higher definition with respect to the original one and therefore, it was possible to estimate the mechanical characteristics of homogeneous materials equivalent to actual masonries, taking into account the texture (i.e., the arrangement of blocks and mortar joints). These materials were used to model the behavior of a case study under seismic action. This model has been compared with others constructed by well-know methods, using ``naked eyes'' survey and the mechanical parameters for materials taken from the Italian Building Code. Our method based on the processing of thermographic images by sampling Kantorovich operators enhances the quality of the model with respect to that based on visual survey only.

\noindent In particular the proposed approach allows to overcome some difficulties that arise when dealing with the vulnerability analysis of existing structures, which are: i) the knowledge of the actual geometry of the walls (in particular the identification of hidden doors and windows); ii) the identification of the actual texture of the masonry and the distribution of inclusions and mortar joints, and from this iii) the estimation of the elastic characteristics of the masonry.
It is noteworthy that, for item i) the engineer has usually limited knowledge, due to the lack of documentation, while for items ii) and iii) he usually use tables proposed in technical manuals and standards which however give large bounds in order to encompass the generality of the real masonries. Instead, the use of reconstruction techniques on thermographic images coupled with homogenization permits to reduce these latter uncertainty on the estimation of the mechanical characteristics of the masonry.

%
%
%
%
%
%
%

\vskip0.1cm
{\footnotesize
{\bf Authors affiliation:}
\vskip0.1cm
\begin{enumerate}
\item {\em Federico Cluni},  Department of Civil and Environmental Engineering, University of Perugia, Via G. Duranti, 93, 06125, Perugia, Italy.\\
Email: {\tt federico.cluni@unipg.it}.

\item {\em Danilo Costarelli}, Department of Mathematics and Physics, Section of Mathematics, University of Roma Tre, Largo San Leonardo Murialdo 1, 06146, Rome, Italy. Email: {\tt costarel@mat.uniroma3.it}.

\item {\em Anna Maria Minotti} and {\em Gianluca Vinti}, Department of Mathematics and Computer Sciences, University of Perugia, Via Vanvitelli 1, 06123, Perugia, Italy. Emails: {\tt annamaria.minotti@dmi.unipg.it} and {\tt gianluca.vinti@unipg.it}.
\end{enumerate}
}

\end{document}